\setlist[itemize]{leftmargin=*}
\setlist[enumerate]{leftmargin=*}
\newcommand{\bi}{\begin{itemize}}
\newcommand{\ei}{\end{itemize}}
\newcommand{\be}{\begin{enumerate}}
\newcommand{\ee}{\end{enumerate}}
\begin{document}

\title{FairReweighing: Density Estimation-Based Reweighing Framework for Improving Separation in Fair Regression
}

\titlerunning{FairReweighing for Fair Regression}        

\author{Xiaoyin Xi         \and
        Zhe Yu 
}


\institute{Xiaoyin Xi \at
              Rochester Institute of Technology \\
              Rochester, New York\\
              \email{xx4455@rit.edu}           
           \and
           Zhe Yu \at
              Rochester Institute of Technology \\
              Rochester, New York\\
              \email{zxyvse@rit.edu}
}

\date{Received: date / Accepted: date}

\maketitle

\begin{abstract}
There has been a prevalence of applying AI software in both high-stakes public-sector and industrial contexts. However, the lack of transparency has raised concerns about whether these data-informed AI software decisions secure fairness against people of all racial, gender, or age groups. Despite extensive research on emerging fairness-aware AI software, up to now most efforts to solve this issue have been dedicated to binary classification tasks. Fairness in regression is relatively underexplored. In this work, we adopted a mutual information-based metric to assess separation violations. The metric is also extended so that it can be directly applied to both classification and regression problems with both binary and continuous sensitive attributes. Inspired by the Reweighing algorithm in fair classification, we proposed a FairReweighing pre-processing algorithm based on density estimation to ensure that the learned model satisfies the separation criterion. Theoretically, we show that the proposed FairReweighing algorithm can guarantee separation in the training data under a data independence assumption. Empirically, on both synthetic and real-world data, we show that FairReweighing outperforms existing state-of-the-art regression fairness solutions in terms of improving separation while maintaining high accuracy.
\keywords{Algorithmic Fairness \and Fair Regression \and Data Pre-processing \and Density Estimation} 
\end{abstract}

\section{Introduction}
Over the last decades, we have witnessed an explosion in the magnitude and variety of applications upon which statistical machine learning (ML) software is exerted and practiced. The influence of such ML software has drastically reshaped every aspect of our daily life, ranging from ads \cite{perlich2014machine} and movie recommendation engines \cite{biancalana2011context} to life-changing decisions that could lead to severe consequences. For example, physicians and medical professionals can determine if a patient should be discharged based on the predicted probability that they possess the necessity of continuing hospitalization \cite{mortazavi2016analysis}. Similarly, the allocation and personnel of police could depend on future predictions of criminal activity in different communities \cite{redmond2002data}. With its underlying power to affect risk assessment in critical judgment, we inevitably need to be cautious of the potential for such a data-driven algorithm to introduce or, in the worst case, amplify existing discriminatory biases, thus becoming unfair. 

With such a prevalence of potentially biased ML software being developed, it is the responsibility of all AI software developers to make their algorithms accountable by reducing the unwanted biases from ML predictions. 
Given that the definition and criteria for fair decisions vary by context \cite{abu2020contextual}, it is not the responsibility of AI developers to determine whether the training data labels are fair; this responsibility lies with domain experts. However, assuming that the training data labels are accurate and fair, AI developers should ensure that the machine learning software does not perform differently across various sensitive demographic groups. That is, the true positive rate and the false positive rate of the predictions in each demographic group should be the same (equalized odds~\cite{hardt2016equality}) for a fairly designed machine learning software \cite{chakraborty2020fairway,chakraborty2021bias,peng2022fairmask} in binary classification settings. A more general definition of equalized odds is \textit{separation}, which requires the predictions to be conditionally independent of the sensitive attributes given the ground truth dependent variable. This is why, among the various notions of fairness proposed for different scenarios \cite{verma2018fairness,mehrabi2021survey}, this work specifically targets this separation criterion \cite{hardt2016equality}. 

An ML model can inherit the bias from its training data labels, e.g., empirical findings demonstrate that the ML model can inherit the semantics biases humans exhibit in nature languages \cite{caliskan2017semantics}. ML model can also become unfair when it favors one demographic group over another by generating more accurate or positive predictions on data from that group, e.g., it has been found that, in 2020, the face recognition model from large companies including Amazon, Microsoft, IBM, etc. predict in significantly lower accuracy ($20-30\%$) for darker female faces than for lighter male faces \cite{face}. 
Another example is the COMPAS analysis \cite{propublica}, where machine learning software shows similar prediction accuracy across black and white groups in predicting whether a defendant will re-offend within two years. However, the software exhibits a significant disparity in error rates: it has about a 20\% higher false positive rate and a 20\% lower false negative rate for black defendants compared to white defendants. This means that more black defendants are incorrectly predicted to be at higher risk of re-offending when they will not, and more white defendants are incorrectly predicted to be at lower risk of re-offending when they will.

While fairness in binary classification settings has been extensively researched in fair machine learning communities, researchers need to pay more attention to its counterpart in regression problems where a continuous quantity is estimated \cite{berk2017convex}. Most of the optimization on classifier-based systems is established within the context where the decisions are simply binary, for instance, college admission/rejection \cite{kuleto2021exploring}, credit card approval/decline \cite{khandani2010consumer}, or whether a convicted individual would re-offend \cite{propublica}. In contrast to these binary classification problems, there are many influential regression problems such as how much to lend someone on a credit card or home loan, or how much to charge for an insurance premium \cite{steinberg2020fairness}. Most of the existing algorithmic fairness metrics and mitigation solutions cannot be applied directly to these regression problems either because the target and/or predicted values are continuous, or that same value may not occur even twice in the training data \cite{berk2017convex}. Thus, it is essential to develop applicable fairness metrics and mitigation algorithms in regression settings.

Existing fairness metrics in fair regression literature are mostly adaptations of well-established mathematical formulations that were proposed for classification problems, such as Generalized Demographic Parity (GDP) \cite{jiang2022generalized} and statistical parity \cite{agarwal2019fair}. Although demographic parity, where the probability of a certain prediction is the same across groups, guarantees that the predictions of a machine learning model are independent of the membership of a sensitive group, it also promotes laziness-- the oversimplification or lack of rigor in ensuring fair representation across demographic groups by either ignoring individual and contextual factors, or neglecting equity and inclusion. In addition, even if a classifier or regressor is perfectly accurate, it would still reflect some degree of bias in terms of demographic parity as long as the actual class distribution varies among sensitive groups. Instead of demographic parity, we focus on the separation criterion in this work. Firstly, it allows for a perfect predictor, where a model is capable of making flawless predictions for all inputs, when class distributions are different \cite{hardt2016equality}. Secondly, it also penalizes the laziness of demographic parity as it incentivizes the reduction of errors equally across all groups. Steinberg et al.~\cite{steinberg2020fairness} proposed two metrics measuring the violation of separation in regression problems based on probability density estimations. However, those two metrics are still limited since they presuppose the existence of discrete sensitive attributes. They do not apply to datasets with continuous sensitive attributes such as age or income. 

In this paper, we draw inspiration from a mutual information-based separation metric from Steinberg et al.~\cite{steinberg2020fairness} to further extend the metric to scenarios involving continuous sensitive attributes by approximations through a probabilistic regressor. This metric is directly applicable for evaluating the violation of separation in both regression and classification problems with either discrete or continuous sensitive attributes. To better ensure separation (measured by the proposed metric) in fair regression problems, we adapted the commonly applied \emph{Reweighing} algorithm \cite{kamiran12} from classification settings to preprocessed the training data with density estimation (to solve the problem that the same value may not occur even twice in the training data) in regression problems. Thus, this new algorithm \emph{FairReweighing} can support both classification and regression problems with discrete or continuous sensitive attributes. Theoretically, we show that the proposed \emph{FairReweighing} algorithm can guarantee separation in the training data under the assumption of (conditional) data independence. Empirically, by competing against state-of-the-art algorithms in fair regression, we highlight the effectiveness of our proposed algorithm and metric through comprehensive experiments on both synthetic and real-world data sets. In general, the following research questions are explored.

\begin{itemize}
 
  \item \textbf{RQ1: Can the continuous mutual information estimation metric correctly evaluate the violation of separation with continuous sensitive attributes?}
  \item \textbf{RQ2: What is the performance of \emph{FairReweighing} compared to state-of-the-art bias mitigation techniques in regression problems?}
  \item \textbf{RQ3: Is \emph{FairReweighing} performing the same as \emph{Reweighing} in classification problems?}
  \item \textbf{RQ4: What is the performance of FairReweighing when optimizing for multiple continuous sensitive attributes?}
\end{itemize}

Our major contributions are as follows:
\begin{itemize}
  \item We extended the estimation of (conditional) mutual information to scenarios involving continuous sensitive attributes that are universally applicable to both classification and regression models.
  \item We proposed a pre-processing technique \emph{FairReweighing} based on density estimation to satisfy the separation criterion for regression problems.
  \item Our approach demonstrates better performance against state-of-the-art mechanisms on both synthetic and real-world regression data sets.
  \item We also showed that the proposed \emph{FairReweighing} algorithm can be reduced to the well-known algorithm \emph{Reweighing} in classification problems. Therefore \emph{FairReweighing} is a generalized version of \emph{Reweighing} and it can be applied to both classification and regression problems.
  \item The experimental data and the simulation results that support the findings of this study are available at \url{https://github.com/hil-se/FairReweighing}.
\end{itemize}
The rest of this paper is structured as follows. Section~\ref{sect:Related Work} provides the background and related work of this paper. Section~\ref{sec:methodology} introduces the proposed solutions including the new metric evaluating separation with continuous sensitive attributes in Section~\ref{sec:Continuous MI}, and the proposed pre-processing algorithm \emph{FairReweighing} and how it ensures separation in Section~\ref{sect:fairreweighing}. To test the proposed solutions, Section~\ref{sect:Experiments} presents the empirical experiment setups on five datasets and shows the results, and answers the research questions. Followed by a discussion of threats to validity in Section~\ref{threats} and a conclusion in Section~\ref{sect:Conclusion}.

\section{Related Work}
\label{sect:Related Work}

\subsection{Fair Classification}

Most of the existing studies in binary classification settings fall into two categories: suggesting novel fairness definitions to evaluate machine learning models and designing advanced procedures to minimize discrimination without much sacrifice on performance. 
 
\subsubsection{Algorithmic fairness notions in classification problems.}
Most of the existing studies in fair machine learning fall into two categories: suggesting novel fairness definitions to evaluate machine learning models, and designing advanced procedures to minimize discrimination without much sacrifice on performance. \emph{Unawareness} \cite{grgic2016case} demands any decision-making process to exclude the sensitive attribute in the training process. Yet, it suffers from the inference of sensitive characteristics through unprotected traits that serve as proxies \cite{corbett2018measure}. Individual fairness considers the treatments between pairs of individuals rather than comparing at the group level. It was proposed under the principle that "similar individuals should be treated similarly" \cite{dwork2012fairness}. However, because the notion is established on assumptions of the relationship between features and labels, it is hard to find appropriate metrics to compare two individuals \cite{chouldechova2020snapshot}. On the other hand, group fairness is a collection of statistical measurements centering on whether some pre-defined metrics of accuracy are equal across different groups divided by the sensitive attributes. Amongst group fairness notions, \emph{Demographic parity} \cite{dwork2012fairness} requires that the positive rate for the target variable is the same across different demographic groups--- the sensitive attributes $A$ being independent of the prediction $\hat{Y}$. One problem with demographic parity is that it does not allow perfect predictors when the actual class distribution varies among sensitive groups. To always allow perfect predictors, \emph{Separation} is defined as the condition where the sensitive attributes $A$ are conditionally independent of the prediction $\hat{Y}$, given the target value $Y$, and we write:
\begin{equation}\label{separation}
\hat{Y} \perp A \mid Y
\quad \text{Equivalently,} \quad
P(\hat{Y} \mid A, Y) = P(\hat{Y} \mid Y)
\end{equation}
For a binary classifier, separation is equivalent to \emph{equalized odds} \cite{hardt2016equality} which imposes the following two constraints:
\begin{gather}
P(\hat{Y} = 1 \mid Y=1, A=a) = P(\hat{Y} = 1 \mid Y=1, A=b) \\
P(\hat{Y} = 1 \mid Y=0, A=a) = P(\hat{Y} = 1 \mid Y=0, A=b)
\end{gather}
Recall that $P(\hat{Y} = 1 \mid Y=1)$ is referred to as the true positive rate, and $P(\hat{Y} = 1 \mid Y=0)$ as the false positive rate of the classifier. The rest of the paper will focus on the separation criterion since it is a commonly applied group fairness notion and it always allows perfect predictors.

\subsubsection{Algorithms targeting separation in classification}
To achieve separation, researchers have developed a variety of bias mitigation algorithms and frameworks in three different ways: data pre-processing, optimization during software training, or post-processing results of the algorithm. Feldman et al.~\cite{feldman15} proposed a method to test and eliminate disparate impact while preserving important information in the data. Zemel et al.~\cite{zemel13} achieve both group and individual fairness by introducing an intermediate representation of the data and obscuring information on sensitive attributes. \emph{FairBalance} proposed by Yu et al.~\cite{yu2024fairbalance} balance training data distribution across every sensitive attribute to support group fairness. Kamiran et al.~\cite{kamiran12} proposed \emph{Reweighing} by carefully adjusting the influence of the tuples in the training set to be discrimination-free without altering ground-truth labels. In our work, we extended \emph{Reweighing} to work on regression models with proper weight selection and assigning.

An alternative method involves addressing bias during the training phase. This can be achieved by incorporating constraints into the optimization objective of the algorithm. Zhang et al.\cite{zhang2018mitigating} proposed a universal and robust training method based on adversarial learning which optimizes the predictor's capability to forecast $Y$ while concurrently minimizing the adversary's capacity to predict $Z$. Regularization and constraint optimization methods frequently integrate fairness considerations into the classifier loss function, working on the confusion matrix during the model training process.

The ultimate approach aims to rectify the outcomes of a classifier to attain fairness. In this method, a classifier provides a score for each individual, and the objective is to make binary predictions based on these scores. Calibration involves ensuring that the ratio of positive predictions aligns with the ratio of positive examples \cite{dawid1982well}. In the fairness context, this alignment must extend to all subgroups, whether they are considered sensitive or not, within the dataset \cite{chen2018my}. Thresholding is another post-processing strategy motivated by the observation that biased decision-makers often make discriminatory decisions near decision boundaries \cite{kamiran2012decision}. This approach is grounded in the understanding that humans commonly apply threshold rules when making decisions \cite{kleinberg2018human}.

\subsection{Fair Regression}
Defining fairness metrics in regression settings has always been challenging. Unlike classification, no consensus on its formulation has yet emerged. The significant difference from the prior classification setup is that the target variable is now allowed to be continuous rather than binary or categorical.

\subsubsection{Metrics of separation in regression problems.}

Berk et al. \cite{berk2017convex} introduce a flexible family of pairwise fairness regularizers for regression problems based on individual notions of fairness in classification settings where similar individuals should be treated similarly. Agarwal et al. \cite{agarwal2019fair} propose general frameworks for achieving fairness in regression under two fairness criteria, statistical parity, which requires that the prediction is statistically independent of the sensitive attribute, and bounded group loss, which demands that the prediction error for any sensitive group does not exceed a pre-defined threshold.

Jiang et al.\cite{jiang2022generalized} present Generalized Demographic Parity (GDP) as a fairness metric for both continuous and discrete attributes that preserve tractable computation and justify its unification with the well-established demographic parity. However, it also inherits its disadvantage of ruling out a perfect predictor and promoting laziness when there is a causal relationship between the sensitive attribute and the output variable. Another recent work by Narasimhan et al.\cite{narasimhan2020pairwise} proposes a collection of group-dependent pairwise fairness metrics for ranking and regression models. They translate classification-exclusive statistics like true positive rate and false positive rate into the likelihood of correctly ranking pairs and develop corresponding metrics.

Most existing separation criterion ($\hat{Y}\perp A | Y$) are equivalent to equalized odds in classification settings and were generalized to regression settings, however almost none of them applies to situations concerning continuous sensitive features. Steinberg et al.~\cite{steinberg2020fairness} present the following two methods for efficiently and numerically approximating the separation criteria within the broader context of regression. 

\textbf{Ratio of separation} is a metric evaluation the violation of separation in the form of density ratios,
\begin{equation}
\mathbb{E}_{\hat{Y}}[r_{sep}] = \frac{P(\hat{Y} \mid A=1, Y)}{P(\hat{Y} \mid A=0, Y)},
\end{equation}
where a ratio of 1 would represent the most fair. The probability densities are approximated by density ratios from the outputs of two probabilistic classifiers:
\begin{gather}
\label{eq:approx_1}
P(A = a \mid \hat{Y} = \hat{y}) \approx \rho(a \mid \hat{y}) \\ 
\label{eq:approx_2}
P(A = a \mid Y = y, \hat{Y} = \hat{y}) \approx \rho(a \mid y, \hat{y})
\end{gather}
Here, $\rho(a \mid \hat{y})$ and $\rho(a \mid y, \hat{y})$ represent the predictions of the probability where $A=a$, which is generated by introducing and training two machine learning classifiers (one with $\hat{y}$ as the input and the other one with $\hat{y}$ and $y$ as inputs). Now, we can use such density ratio estimates to approximate $r_{sep}$ by using Bayes’ rule, \eqref{eq:approx_1} and \eqref{eq:approx_2}
\begin{equation}\label{rsep}
\hat{r}_{sep} = \frac{1}{n} \sum_{i=1}^{n} \frac{\rho(1 \mid y_i, \hat{y}_i)}{1-\rho(1 \mid y_i, \hat{y}_i)} \cdot \frac{1-\rho(1 \mid y_i)}{\rho(1 \mid y_i)}
\end{equation}
where $i$ represents a data point, and the formula suggests that it does not apply to sensitive attributes that are continuous. The separation approximations gauge the additional predictive power that the joint distribution of $Y$ and $\hat{Y}$ provides in determining $A$, compared to solely considering the marginal distributions of $Y$. These techniques are designed to be versatile, working independently of the specific regression algorithms employed. 

\textbf{(Conditional) mutual information approximation.} 
Steinberg et al.~\cite{steinberg2020fairness} also presented another alternative method for assessing fairness criteria, which mitigates certain constraints associated with the direct density ratio approach mentioned above, and involves computing the mutual information \cite{cover1991elements} between variables. In the realms of probability theory and information theory, the mutual information between two random variables serves as a gauge of the interdependence shared between these variables. To evaluate the independence criterion outlined in \eqref{separation}, the conditional mutual information between variables $\hat{Y}$ and $A$ can be computed,
\begin{equation}
    I[\hat{Y};A \mid Y] = \int_{Y} \int_{\hat{Y}} \sum_{a \in \mathcal{A}} P(y,\hat{y},a) \log \frac{P(\hat{y},a \mid y)}{P(\hat{y}\mid y)P(a\mid y) } dyd\hat{y}
\end{equation}

Here, it is evident that under conditions of independence and perfect fairness, the joint probability $P(\hat{Y}, A \mid Y)$ equals the product of the marginal probabilities $P(\hat{Y} \mid Y)$ and $P(A \mid Y)$, resulting in $I[\hat{Y};A \mid Y] = 0$. Conversely, if there is a lack of independence, the mutual information (MI) will be positive. This metric seamlessly accommodates binary and categorical $A$. Once again, employing empirical estimation and leveraging the probabilistic classifiers, the conditional mutual information for separation is,
\begin{equation}\label{mi}
\hat{I}_{sep} = \frac{1}{n} \sum_{i=1}^n \log \frac{\rho(a_i\mid y_i, \hat{y}_i)}{\rho(a_i \mid y_i)}
\end{equation}
We can observe that these approximate measures rely on the probabilistic classifiers. In contrast to the direct density ratio measures, they exhibit a more natural operation with categorical sensitive attributes through the use of multiclass probabilistic classification, but not continuous ones.
In summary, the two separation metrics in \eqref{rsep} and \eqref{mi} from Steinberg et al.~\cite{steinberg2020fairness} do not apply to problems with continuous sensitive attributes. Therefore in Section~\ref{sect:GMI} we extend the mutual information metric in \eqref{mi} to scenarios with continuous sensitive attributes by proposing the continuous mutual information estimation metric.

\subsubsection{Bias mitigation in regression}

Categorized as in-processing approaches, Berk et al.~\cite{berk2017convex}, Agarwal et al.~\cite{agarwal2019fair} and Jiang et al.~\cite{jiang2022generalized} all introduce a regularization (or penalty) term in the training process and aim to find the model which minimizes the associated regularized loss function, where Narasimhan et al.~\cite{narasimhan2020pairwise} directly enforce a desired pairwise fairness criterion using constrained optimization. Calders et al.~\cite{calders2013controlling} also proposed a propensity score-based stratification approach for balancing the dataset to address discrimination-aware regression. They define two measures for quantifying attribute effects and then develop constrained linear regression models for controlling the effect of such attributes on the predictions. It is essentially an in-processing mechanism that solves an optimization problem between predicting accuracy and discrimination.

In this paper, we propose (in Section~\ref{sect:fairreweighing}) a pre-processing technique to satisfy the separation criterion. By reweighing the training data before training any regression model, we show that separation can be satisfied. This technique is more efficient and has significantly low overhead competing against other mechanisms. It is also considered more generalized as there is no limitation on which learning algorithms should be applied. This approach is free of trade-off or \emph{Price of Fairness} parameters (weights) and can achieve a competitively better balance between sensitive attributes and prediction accuracy.

\section{Methodology}\label{sec:methodology}
\subsection{Continuous Mutual Information Estimation}
\label{sec:Continuous MI}

\label{sect:GMI}
Compared with the direct density ratio measures of separation, the conditional mutual information approximation proposed by Steinberg et al.~\cite{steinberg2020fairness} can be applied to categorical sensitive attributes instead of binary ones using multiclass probabilistic classification. Until now, much of the research on fairness has primarily concentrated on safeguarding categorical variables. In this study, we loosen this assumption and present a continuous version of mutual information able to handle continuous sensitive attributes, $A\in \mathbb{R}$.

We adopt the same mathematical formulation of mutual information in \eqref{mi} using empirical estimation,
\begin{equation}\label{mi_con}
\hat{C}_{sep} = \frac{1}{n} \sum_{i=1}^n \log \frac{\rho(a_i\mid y_i, \hat{y}_i)}{\rho(a_i \mid y_i)}.
\end{equation}
Different from Steinberg et al.~\cite{steinberg2020fairness}, when operating on continuous sensitive attributes, we will not be able to use classifiers to estimate the conditional densities per instance as in the case of the direct density ratio estimation. The target $A$ is a continuous quantity and there could be an infinite number of instances. To approximate such density, instead of relying on the output of a probabilistic classifier, we introduce and train two linear regression models $f_1(Y, \hat{Y})$ and $f_2(\hat{Y})$ 
 to predict the sensitive attribute $A$. 
As shown in Figure \ref{fig:linear}, $\rho(a_i\mid y_i, \hat{y}_i)$ and ${\rho(a_i \mid y_i)}$ can be estimated as
\begin{equation}
\begin{aligned}
\rho(a_i | y_i, \hat{y}_i) &\hat{=} \frac{1}{\sqrt{2\pi \sigma_1^2}}e^{-\frac{(a_i-f_1(y_i, \hat{y}_i))^2}{2\sigma_1^2}} \\
\rho(a_i | \hat{y}_i) &\hat{=} \frac{1}{\sqrt{2\pi \sigma_2^2}}e^{-\frac{(a_i-f_2(\hat{y}_i))^2}{2\sigma_2^2}}.
\end{aligned}
\end{equation}
where $\sigma_1$ and $\sigma_2$ are estimated by the residual errors of $f_1$ and $f_2$:
\begin{equation*}
\begin{aligned}
\sigma_1 &\hat{=} \sqrt{\frac{1}{n}\sum_{i=1}^n (a_i-f_1(y_i, \hat{y}_i))^2} \\
\sigma_2 &\hat{=} \sqrt{\frac{1}{n}\sum_{i=1}^n (a_i-f_2(\hat{y}_i))^2}.
\end{aligned}
\end{equation*}

\begin{figure}[ht]
  \centering
  \includegraphics[width=\linewidth]{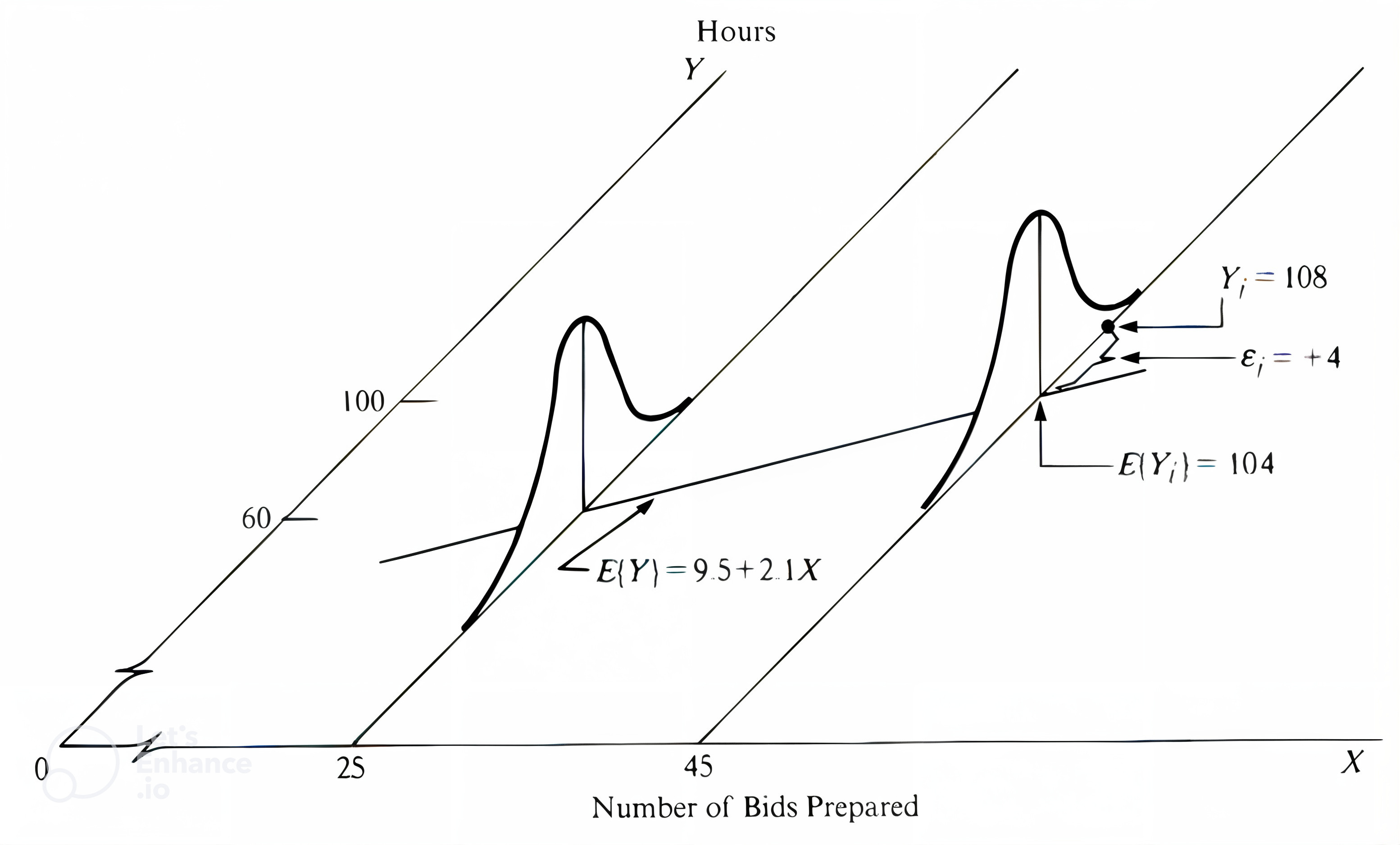}
  \caption{Conditional Distribution of Response in Regression Model}
  \label{fig:linear}
\end{figure}

\subsection{FairReweighing}
\label{sect:fairreweighing}
The Reweighing algorithm proposed by Kamiran et al.~\cite{kamiran12} preprocesses the training data with different sampling weights for each demographic group with different dependent variables:
\begin{equation}
\label{reweighing}
W(a, y) = \frac{ P(a) P(y)}{P(a, y)}.
\end{equation}
Reweighing requires very little computation overhead and does not need any trade-off parameter when it ensures separation for the learned model on the training data--- we will show this later in \textbf{Proposition 1}. However, while the probabilities of $P(a)$, $P(y)$, and $P(a,y)$ can be easily estimated by frequency counts in the binary classification setting, they are difficult to estimate in regression settings when both $a$ and $y$ can be continuous--- the same value may only occur once in the training data. One solution is to specify an interval threshold and segment the training data into a set of \(N\) brackets. Then we can approximate it into a multiclass classification problem with \(y_1...y_N\) target groups. However, such a binning or bucketing mechanism throws down another challenge on where the lines between bins should be drawn, and it might introduce hyperparameters that need to be tuned manually. To accurately capture the above probabilities, we adopt two different nonparametric probability density estimation techniques in FairReweighing.

\textbf{FairReweighing (Neighbor):} The first approach, radius neighbors \cite{goldberger2004neighbourhood}, is an enhancement to the well-known k-nearest neighbors algorithm that evaluates density by taking into account all the samples within a defined radius of a new instance instead of just its k closest neighbors. By specifying the radius range and the metric to use for distance computation, we locate all neighbors of a given sample and use that as an estimation of its density:
\begin{equation}
\rho_N(x) = N(x_i \mid dist(x,x_i)<r) \quad \forall i \in \{1...N\}
\end{equation}

\textbf{FairReweighing (Kernel):} Another method we implement is kernel density estimation (KDE) \cite{terrell1992variable}, which is a widely used non-parametric method for estimating the probability density function of a continuous random variable. It works by placing a kernel (a smooth, symmetric function, typically a Gaussian) at each data point in the dataset. The KDE is then computed by summing the contributions of all the kernels placed at each data point. The result is a smooth curve that represents the estimated probability density function. The KDE at a point 
$x$ is given by:
\begin{equation}\label{kde}
\hat{f}(x) = \frac{1}{nh} \sum_{i=1}^n K (\frac{x - x_i}{h})
\end{equation}
Where $n$ is the number of data points. $h$ is the bandwidth (smoothing parameter). And $K$ is the kernel function.

Whichever density estimation is administered through the pipeline, \emph{FairReweighing} is a generalized algorithm that can be applied to both classification and regression problems. Furthermore, it is capable of constraining multiple sensitive attributes in specific applications. For example, it can separation concerning gender and race simultaneously. Because both radius neighbors and kernel density estimation can be performed in any number of dimensions, our approach would automatically calculate the extent to which each feature should be reweighted to meet the separation criteria. 
After the imbalance between the observed and expected probability density has been equalized as in \eqref{reweighing}, we use the calculated weights on our training data to learn for a discrimination-free regressor or classifier. The procedure outlining \emph{FairReweighing} is detailed in Algorithm \ref{alg:FairReweighing}.
\begin{algorithm}
\DontPrintSemicolon 
\SetKwInOut{Input}{Input}
\SetKwInOut{Output}{Output}
\SetKwInOut{Parameter}{Parameter}
\SetKwRepeat{Do}{do}{while}
\Input{$(X\in \mathbb{R}^{n},A\in \mathbb{R},Y\in \mathbb{R})$, Given }
\Output{Regressor or classifier learned on reweighted $(X,A,Y)$ }
\For{$a_i \in A$}{
    \For{$y_i \in Y$}{
        $W(a_i, y_i) = \frac{\rho (a_i) \times \rho (y_i)}{\rho (a_i, y_i)}$
    }
}
\For{x $\in$ X}{
    $W(x) = W(x(A), x(Y))$
}
Train a regressor or classifier on $(X,A,Y)$ with sample weight $W(X)$\;
\Return Regressor or Classifier $M$
\caption{\emph{FairReweighing}}\label{alg:FairReweighing}
\end{algorithm}

\subsubsection{How FairReweighing ensures separation}\label{sec:proof}

\textbf{Problem Statement.} Consider training a model on a dataset $(X\in \mathbb{R}^n,\,A\in \mathbb{R},\,Y\in \mathbb{R})$, the model makes predictions of $\hat{Y}\in \mathbb{R}$ based on its inputs $(x,a)$. The goal is to have the model's prediction $\hat{Y}$ satisfy the separation criterion in \eqref{separation}.

\begin{proposition}
Under the conditional independence assumption $X\perp A|Y$, a model trained with FairReweighing satisfies separation on its training data. 
\end{proposition}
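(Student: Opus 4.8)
The plan is to track how the FairReweighing weights $W(a,y)=P(a)P(y)/P(a,y)$ reshape the training distribution, and to show that combining these weights with the assumption $X\perp A\mid Y$ forces the reweighed distribution to satisfy the much stronger joint independence $A\perp(X,Y)$. Once that is established, the optimal predictor learned on the reweighed data is a function of $X$ alone, and a deterministic function of $X$ inherits conditional independence from $X$; this yields separation back on the original data.

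First I would write the reweighed density as $\tilde{P}(x,a,y)=W(a,y)\,P(x,a,y)$ and factor $P(x,a,y)=P(x\mid a,y)\,P(a,y)$. Substituting the assumption $X\perp A\mid Y$, i.e.\ $P(x\mid a,y)=P(x\mid y)$, gives
\[
\tilde{P}(x,a,y)=\frac{P(a)P(y)}{P(a,y)}\,P(x\mid y)\,P(a,y)=P(a)\,P(x,y).
\]
Marginalizing shows the relevant marginals are preserved, $\tilde{P}(a)=P(a)$ and $\tilde{P}(x,y)=P(x,y)$, so the factorization reads $\tilde{P}(x,a,y)=\tilde{P}(a)\,\tilde{P}(x,y)$; that is, $A\perp(X,Y)$ under $\tilde{P}$. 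In particular $\tilde{P}(y\mid x,a)=\tilde{P}(y\mid x)$, so in the reweighed world $A$ carries no information about $Y$ once $X$ is known. Note that the assumption $X\perp A\mid Y$ is exactly what is needed here: without it the substitution fails and the density does not factor.

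Next I would use this to pin down the trained model. For a model minimizing the reweighed squared-error risk with sufficient capacity, the minimizer is the conditional mean $\mathbb{E}_{\tilde{P}}[Y\mid X=x,A=a]$, which by the previous step equals $\mathbb{E}[Y\mid X=x]$ and is therefore independent of $a$. Hence the learned predictor reduces to $\hat{Y}=g(X)$ for some measurable $g$, effectively discarding the sensitive input. This is the step I expect to be the main obstacle: it presumes that training actually reaches the risk minimizer and that the loss is one (such as squared error) whose Bayes-optimal predictor is the conditional mean. For a misspecified model class, or a loss whose optimum still depends on $a$, the reduction $\hat{Y}=g(X)$ need not hold exactly, so the statement is really about the idealized optimum and should be scoped accordingly.

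Finally, returning to the original data distribution $P$ on which separation is assessed, I would invoke the closure of conditional independence under measurable maps: since $X\perp A\mid Y$ and $\hat{Y}=g(X)$, we have $g(X)\perp A\mid Y$, i.e.\ $\hat{Y}\perp A\mid Y$, which is precisely the separation criterion \eqref{separation}. A one-line verification of this closure property, writing $\{g(X)\in D\}=\{X\in g^{-1}(D)\}$ inside the conditional factorization $P(X\in B,\,A\in C\mid Y)=P(X\in B\mid Y)\,P(A\in C\mid Y)$, completes the argument.
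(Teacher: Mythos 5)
Your proposal is correct and takes essentially the same route as the paper: both hinge on the identity $W(a,y)\,P(x,a,y)=P(a)\,P(x,y)$ under $X\perp A\mid Y$, conclude that the predictor trained on the reweighed data depends on $x$ alone, and then push $X\perp A\mid Y$ through that predictor to obtain $\hat{Y}\perp A\mid Y$. The differences are only presentational: the paper idealizes the model as a probabilistic predictor whose predictive distribution equals the reweighed conditional of $Y$ given $(x,a)$ and verifies separation by explicit integration, whereas you idealize it as the squared-loss Bayes optimum $g(X)$ and finish with the closure of conditional independence under measurable maps.
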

\begin{proof}
To check whether separation is satisfied for the model on its training data, we have
\begin{equation}
\label{psy}
\begin{aligned}
P(\hat{Y}=\hat{y} | y) &= \iint P(\hat{Y}=\hat{y} | x, a)P(x,a|y)dxda\\
P(\hat{Y}=\hat{y} | a, y) &= \int P(\hat{Y}=\hat{y} | x, a)P(x|a, y)dx.
\end{aligned}
\end{equation}
This is because the model's prediction $\hat{Y}$ is completely decided by the input $(x, a)$ and thus $\hat{Y}\perp Y | (X, A)$. Under conditional independence assumption of the data $X\perp A | Y$, we have
\begin{equation}
\label{psy2}
\begin{aligned}
P(\hat{Y}=\hat{y} | y) &= \iint P(\hat{Y}=\hat{y} | x, a)P(x|y)P(a|y) dxda\\
P(\hat{Y}=\hat{y} | a, y) &= \int P(\hat{Y}=\hat{y} | x, a)P(x|y) dx.
\end{aligned}
\end{equation}
With FairReweighing in \eqref{reweighing}, the model learns from the weighted training data:
\begin{equation}
\label{pred2}
\begin{aligned}
P(\hat{Y} = \hat{y} | x, a) &=  \frac{P(Y=\hat{y}, x, a)W(a,\hat{y})}{\int P(Y=y, x, a)W(a,y)dy}.
\end{aligned}
\end{equation}
With \eqref{reweighing} and the independence assumption of $X\perp A | Y$, we have
\begin{equation}
\label{pw}
\begin{aligned}
P(y, x, a)W(a,y) &= \frac{ P(a) P(y)P(y, x, a)}{P(a, y)}\\
&= P(a)P(y)P(x|a,y)\\
&= P(a)P(y)P(x|y)= P(a)P(x,y). 
\end{aligned}
\end{equation}
Apply \eqref{pw} to \eqref{pred2} we have
\begin{equation}
\label{pred3}
\begin{aligned}
P(\hat{Y} = \hat{y} | x, a) &= \frac{P(a)P(x,\hat{y})}{\int P(a)P(x,y) dy} = \frac{P(x,\hat{y})}{\int P(x,y) dy} \\
&= \frac{P(x,\hat{y})}{P(x)}  = P(\hat{y}|x).
\end{aligned}
\end{equation}
Apply \eqref{pred3} to \eqref{psy2} we have
\begin{equation}
\label{psy3}
\begin{aligned}
P(\hat{Y}=\hat{y} | a, y) &= \int P(\hat{y} | x)P(x|y)dx\\
P(\hat{Y}=\hat{y} | y) &= \iint P(\hat{y} | x)P(x|y)P(a|y) dxda\\ 
&= \int P(\hat{y} | x)P(x|y)dx \cdot \int P(a|y) da \\
&=\int P(\hat{y} | x)P(x|y) dx = P(\hat{Y}=\hat{y} | a, y).
\end{aligned}
\end{equation}
Therefore $\hat{Y}\perp A|Y$ and the model satisfies separation. 
\end{proof}

\section{Experiment}
\label{sect:Experiments}

In this section, we answer our research questions through experiments on both synthetic and real-world data in regression settings. For radius neighbors, we find the best radius at 0.5 through cross-validation and calculate the Euclidean distance between the target sample and other data points. When implementing KDE, we first standardize features by subtracting the mean and scaling to unit variance, then proceed with Gaussian as kernel function and bandwidth of 0.2 based on the cross-validation result of $\hat{C}_{sep}$ on the training data. Linear regression is applied as the regressor given its estimation errors (MSE) on these small tabular datasets. Each data set is divided randomly into training and test sets with a ratio of 50:50, and we report the average of any given metric over 20 independent iterations. The algorithms were implemented in Python and all experiments were run on a linux machine with a 5.8 GHz Intel i9 processor, GeForce RTX 4090 GPU and 64GB 6000MHz DDR5 memory. We compare \textbf{FairReweighing} against (a) Berk et al.~\cite{berk2017convex} which is designed to achieve convex individual fairness; (b) Agarwal et al.~\cite{agarwal2019fair} which is designed to achieve statistical (or demographic) parity (DP) in fair classification and regression; (c) Narasimhan et al.~\cite{narasimhan2020pairwise} which is designed to achieve a pairwise fairness metric. 
Apart from comparing against these state-of-the-art bias mitigation techniques, we also include a none-treatment method as a baseline in all experiments. An overview of the datasets can be seen in Table \ref{tab:data}.\footnote{Code is available at: \url{https://github.com/hil-se/FairReweighing}.}
\begin{table}[ht]
\small
  \setlength\tabcolsep{2pt}
    \setlength\extrarowheight{3pt}
  \caption{Data overview}
  \label{tab:data}
  \centering
  \begin{tabular}{l|c|c|c|c|c}
  \hline \multirow{2}{*}{Type} & \multirow{2}{*}{Data} & \multirow{2}{*}{Total size} & \multirow{2}{*}{\# features} & \multicolumn{2}{c}{Sensitive Attributes}  \\ \cline{5-6}
  &&&& Binary & Continuous\\
  \hline
  
  \multirow{3}{*}{Regression} & \emph{Synthetic} & 5000 & 3 & Gender & Age  \\ 
  
   & \emph{Law} & 22,407 & 10 & Race & \\ 

   & \emph{Crimes} & 1994 & 119 & Race & Race\%\\ 

  \hline


  \hline
  \end{tabular}
\end{table}

We also experimented on more complex and larger real-world dataset with VGG-16 model but the results are not that valuable since $\hat{C}_{sep}$ is already minimal without any treatment. Detailed results are included in the appendix.

To answer \textbf{RQ1} and \textbf{RQ2}, we study how \emph{FairReweighing} (with a linear regression model trained on the pre-processed data) performs against the state-of-the-art bias mitigation algorithms for regression. Every treatment is evaluated by the $\hat{r}_{sep}$ metric from \eqref{rsep}, $\hat{I}_{sep}$ from \eqref{mi} and $\hat{C}_{sep}$ from \eqref{mi_con} --- the closer $\hat{r}_{sep}$ is to $1.0$ and $\hat{I}_{sep}$, $\hat{C}_{sep}$ to $0$ , the better the model satisfies separation. Note that $\hat{r}_{sep}$ and $\hat{I}_{sep}$ are only applicable to binary sensitive attribute settings while $\hat{C}_{sep}$ can handle continuous sensitive attributes. We conducted comprehensive experiments on one synthetic data set and two real-world data sets.

\vspace{0.2cm} 
\begin{tabular}{rl}
$\text{gender}_i$ &$\sim  Bernoulli(0.7)$ \\
$\text{age}_i$ &$\sim  Normal(40,15)$ \\

$\text{height}_i \mid \text{gender}_i = 1$ & $\sim Normal(1.75, 0.15)$ \\
$\text{height}_i \mid \text{gender}_i = 0$ &$\sim Normal(1.65, 0.1)$ \\
$\text{power}_i \mid \text{gender}_i = 1$ &$\sim Normal(0.6, 0.15)$ \\
$\text{power}_i \mid \text{gender}_i = 0$ &$\sim Normal(0.5, 0.1)$ \\
$\text{jump}$ &$\sim \frac{(\text{height}+\text{power}) \times 40}{age}$
\end{tabular}

\vspace{0.2cm} 

\subsubsection{Synthetic Data}

Our synthetic data set is constructed as above. The following are the critical assumptions for generating this data: vertical jump height is primarily determined by the height of the tester and his/her power level, and women tend to be shorter and have less power level than men on average, same with age. Our task is to predict a subject's vertical jump height based on gender, age, and height (power as a hidden feature) and evaluate proposed fairness metrics on sensitive attributes. The reason we construct such a naive data set is to test the accuracy of our density estimation methods by comparing the estimations with the ground truth distributions.

\begin{figure}[ht]
  \centering

  \includegraphics[width=\linewidth]{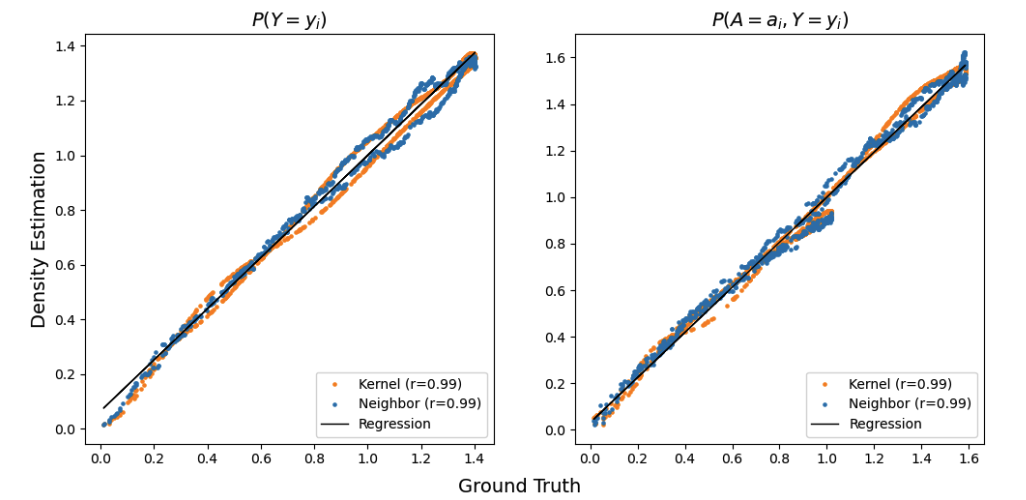}
  \caption{Comparison between density estimations and the ground truth distribution.}
  \label{fig:Synthetic}
\end{figure}

Figure \ref{fig:Synthetic} demonstrates the relationship between the two density estimations for $P(Y=y_i)$ and $P(A=a_i, Y=y_i)$ with their corresponding ground truth distributions when treating gender as the sensitive attribute. Both estimations have a strong positive correlation with a Pearson correlation coefficient close to 1 ($r=0.99$). 
Table \ref{tab:synthetic} shows the accuracy and fairness metric when implementing \emph{FairReweighing} with either ground truth distribution or density estimation approximation. The result suggests that \emph{FairReweighing} with both density estimations successively improve separation while maintaining good prediction accuracy.

\begin{table*}[ht]
\small
  \setlength\tabcolsep{4pt}
    \setlength\extrarowheight{1pt}

  \caption{Results on the synthetic regression problem.}
  \label{tab:synthetic}
  \centering
  \begin{tabular}{l|c|c|c|c|c|c|c}
  \hline {Bias Mitigation Treatment} & MSE & $R^2$ & BGL & $\hat{r}_{sep}$(G) &$\hat{I}_{sep}$(G) & $\hat{C}_{sep}$(G) & $\hat{C}_{sep}$(A)\\ \hline
  
  {None} & 0.019 & 0.594 & 0.021 & 1.580 & 0.157 & 0.089 & 0.124\\ 

  {\emph{FairReweighing} (Ground Truth)} & 0.020 & 0.566 & 0.022 & 1.027 & 0.008  & 0.002 & 0.003\\
  
  {\emph{FairReweighing} (Neighbor)} & 0.020 & 0.577 & 0.024 & 1.075 & 0.017 & 0.008 & 0.005\\
  
  {\emph{FairReweighing} (Kernel)} & 0.019 & 0.555 & 0.024 & 1.064 & 0.016 & 0.008 &0.008\\  
  \cline{2-8} \hline
  \end{tabular}
\end{table*}

\subsubsection{Real World Data}
\begin{itemize}
    \item The \emph{Communities and Crimes} \cite{redmond2002data} data contains 1,994 instances of socio-economic, law enforcement, and crime data about communities in the United States. There are nearly 140 features, and the task is to predict each community's per capita crime rate.
    Practically, when the prediction of crime rate from the regression model trained on this data is being used to decide the necessary police strength in each community, ethical issues may arise when the separation criterion is not satisfied for the model's prediction and whether the community is white dominant or not. This means the predictions are incorrectly affected by the sensitive attribute for communities with the same crime rates. We will show in our experiment how separation will be evaluated and tackled by FairReweighing.

    \item The \emph{Law School} \cite{wightman1998lsac} data records information of 22,407 law school students, and we are trying to predict a student's normalized first-year GPA based on his/her LSAT score, family income, full-time status, race, gender and the law school cluster the student belongs to. In our experiment, we treat race as a discrete sensitive attribute, divided into black and white students. 
    Practically, when the prediction of first-year GPA from the regression model trained on this data is being used to decide whether a student should be admitted, ethical issues may arise when the separation criterion is not satisfied for the model's prediction and gender or race. This means the predictions are incorrectly affected by these sensitive attributes for students with the same first-year GPAs. We will show in our experiment how separation will be evaluated and tackled by FairReweighing.
    
\end{itemize}

\begin{table*}[ht]
\small
  \setlength\tabcolsep{6pt}
    \setlength\extrarowheight{1pt}
  \caption{Results on real world regression problems.}
  \label{tab:binary}
  \centering
  \begin{tabular}{l|c|c|c|c|c|c|c}
  \hline Data & Group & {Bias Mitigation Treatment} & MSE & $R^2$ & $\hat{r}_{sep}$ & $\hat{I}_{sep}$ & $\hat{C}_{sep}$\\ \hline  

    \multirow{6}{*}{\emph{Law}} & \multirow{6}{*}{Race} & {None} &  0.073 &0.526 &1.240 &0.056&0.028\\   
  & & {Berk et al.} & 0.075 &0.518&1.096&0.023&0.015\\ 
  & & {Agarwal et al.} &  0.075&0.522&1.099&0.024&0.016 \\ 
  & & {Narasimhan et al.} &  0.073&0.533&1.113&0.036&0.020\\   
  & & {\emph{FairReweighing} (Neighbor)} &  0.074&0.543&1.032 &0.014&0.009 \\   
  & & {\emph{FairReweighing} (Kernel)} &  0.079&0.516&1.036 &0.017&0.009 \\ \hline
  \multirow{13}{*}{\emph{Crimes}} & \multirow{6}{*}{Race} & {None} & 0.020 & 0.626 & 1.579 & 0.191& 0.099  \\   
  & & {Berk et al.} & 0.024 & 0.553 & 1.130 & 0.043 & 0.020 \\ 
  & & {Agarwal et al.} &  0.024 & 0.548 & 1.134 & 0.046 & 0.014  \\ 
  & & {Narasimhan et al.} &  0.029&0.474&1.166 & 0.065 & 0.044 \\   
  & & {\emph{FairReweighing} (Neighbor)} &  0.024&0.562&1.099 & 0.022& 0.007 \\   
  & & {\emph{FairReweighing} (Kernel)} &  0.023&0.567&1.105 & 0.023& 0.007 \\ \cline{2-8}
 &\multicolumn{4}{c|}{}& \multicolumn{3}{c}{$\hat{C}_{sep}$}\\ \cline{2-8}

   & \multirow{6}{*}{Race\%} & {None} & 0.020 & 0.632 & \multicolumn{3}{c}{0.205} \\   
  & & {Berk et al.} & 0.024 & 0.553 & \multicolumn{3}{c}{0.055} \\ 
  & & {Agarwal et al.} &  0.024 & 0.548 & \multicolumn{3}{c}{0.048} \\ 
  & & {Narasimhan et al.} &  0.029&0.474&\multicolumn{3}{c}{0.077} \\   
  & & {\emph{FairReweighing} (Neighbor)} &  0.024&0.532&\multicolumn{3}{c}{0.011} \\   
  & & {\emph{FairReweighing} (Kernel)} &  0.025&0.538&\multicolumn{3}{c}{0.013} \\ \hline  
  \end{tabular}
\end{table*}

\subsubsection{Results} 
Table \ref{tab:synthetic} and Table \ref{tab:binary} show the average accuracy and fairness metric on data sets in the regression setting with both synthetic and real-world datasets.

\paragraph{\textbf{RQ1}}
By comparing the trend of $\hat{C}_{sep}$ with $\hat{r}_{sep}$ and $\hat{I}_{sep}$, we can see that it follows the same trajectory of decreasing when applying different bias mitigation techniques. This demonstrates that the metric that we proposed, $\hat{C}_{sep}$, is consistent with $\hat{r}_{sep}$ and $\hat{I}_{sep}$ when the sensitive attribute is discrete, and it also accurately correctly evaluate the
violation of separation with continuous sensitive attributes.

\paragraph{\textbf{RQ2}}
Predictions generated by a linear regression model trained after \emph{FairReweighing} are significantly less biased in terms of $\hat{r}_{sep}$, $\hat{I}_{sep}$ and $\hat{C}_{sep}$ against other treatments for both density estimation method. Across all three data sets, we see an average of 71\% reduction in all three measures compared to all other treatments. In terms of accuracy, our proposed treatments retain high prediction accuracy in all cases, with MSE and $R^2$ scores comparable to the non-treatment baseline. Although we did not fully minimize $\hat{r}_{sep}$ to 1, or $\hat{I}_{sep}$/$\hat{C}_{sep}$ to 0 (no discrimination at all) in some cases, the results still demonstrate that our approach performs better than the state-of-the-art bias mitigation technique in terms of all metrics and can always consistently balances accuracy and fairness. Hence, \emph{FairReweighing} generates the most fair predictions concerning the separation criterion when compared to state-of-the-art bias mitigation techniques in regression problems.

It is also worth mentioning that none of the baselines compared in Table~\ref{tab:binary} were specifically designed to optimize for separation. While Berk et al.~\cite{berk2017convex} has been used to improve $\hat{r}_{sep}$ in Steinberg et al.~\cite{steinberg2020fairness}, it was originally designed to optimize for individual fairness. This also explains why FairReweighing achieves better separation when compared to other baselines--- it is the first treatment specifically designed to optimize for separation in regression problems.

\subsubsection{Results with deep neural networks}

To validate the generalizability of the proposed metric and \emph{FairReweighing} algorithm in complex ML models, we also performed experiments on a facial beauty prediction dataset SCUT-FBP5500 with 5,500 face images and their corresponding attractiveness ratings. We used a VGG-16 model for the prediction. Table~\ref{tab:SCUT_avg} and ~\ref{tab:SCUT_R2} are the results when training and testing with average ratings of 60 raters, and ratings from the P8 rater. The results were not that informative because separation is naturally satisfied even without \emph{FairReweighing}-- the model trained on P8's ratings has the largest violation of separation before applying \emph{FairReweighing}. However, it still demonstrates that our mechanism applies to larger real-world datasets under more complex models.

\begin{table*}[ht]
    \setlength\extrarowheight{3pt}
  \caption{Average accuracy and fairness metrics for each sensitive attributes on SCUT-FBP5500 training on average ratings from 60 raters}
  \label{tab:SCUT_avg}
  \begin{center}
  \begin{tabular}{l|c|c|c|c|c|c|c}
  \hline \multirow{2}{*}{Treatment} & \multirow{2}{*}{MSE} & \multirow{2}{*}{Pearson $r$} & \multirow{2}{*}{Spearman's $r_s$} & \multicolumn{2}{c|}{Sex} & \multicolumn{2}{c}{Race} \\ \cline{5-8}
  
  & & & & $\hat{I}_{sep}$ & $\hat{C}_{sep}$ & $\hat{I}_{sep}$ & $\hat{C}_{sep}$ \\ \hline
  
  {None} & 0.092 & 0.804 & 0.811 & 0.026 & 0.018 & 0.006 & 0.004 \\ 
  
  {\emph{FR} (Neighbor)} & 0.101 & 0.796 & 0.769 & 0.013 & 0.011 & 0.002 &  0.003 \\
  
  {\emph{FR} (Kernel)} & 0.097 &0.801 &0.796 &0.023 &0.012 &0.003 &0.001 \\  \hline
  \end{tabular}
  \end{center}
\end{table*}

  
  
  
  

\begin{table*}[ht]
    \setlength\extrarowheight{3pt}
  \caption{Average accuracy and fairness metrics for each sensitive attributes on SCUT-FBP5500 training on ratings from P8.}
  \label{tab:SCUT_R2}
  \begin{center}
  \begin{tabular}{l|c|c|c|c|c|c|c}
  \hline \multirow{2}{*}{Treatment} & \multirow{2}{*}{MSE} & \multirow{2}{*}{Pearson $r$} & \multirow{2}{*}{Spearman's $r_s$} & \multicolumn{2}{c|}{Sex} & \multicolumn{2}{c}{Race} \\ \cline{5-8}
  
  & & & & $\hat{I}_{sep}$ & $\hat{C}_{sep}$ & $\hat{I}_{sep}$ & $\hat{C}_{sep}$ \\ \hline
  
  {None} & 0.188 & 0.563 & 0.536 & 0.046 &  0.022 & 0.003  & 0.009 \\ 
  
  {\emph{FR} (Neighbor)} & 0.173 & 0.594 & 0.545 & 0.038 & 0.021 & 0.012 &  0.004\\
  
  {\emph{FR} (Kernel)} & 0.169 & 0.626 & 0.584 & 0.023 & 0.022 & 0.001 &  0.004\\  \hline
  \end{tabular}
  \end{center}
\end{table*}

\subsection{Classification}
\subsubsection{Data sets}
To answer \textbf{RQ3}, we used two publicly available data sets in fair classification literature that assume either binary or continuous sensitive groups:
\begin{itemize}
    \item The \emph{German Credit} \cite{german} data consists of 1000 instances, with 30\% of them being positively classified into good credit risks. There are nine features, and the task is to predict an individual's creditworthiness based on his/her economic circumstances. Gender or age is considered the sensitive attribute in our experiments.
    \item The \emph{Heart Disease} \cite{german} data consists of 1190 instances with 14 attributes. The major task of this data set is to predict whether there is the presence of heart disease or not (0 for no disease, 1 for disease) based on the given attributes of a patient. In our experiments, gender or age is treated as a sensitive attribute.
\end{itemize}

\begin{table*}[ht]
\small
  \setlength\tabcolsep{4pt}
  \setlength\extrarowheight{2pt}
  \caption{Results on real world classification problems.}
  \label{tab:classification}
  \begin{center}
  \begin{tabular}{l|c|c|c|c|c|c|c|c|c}
  \hline Data & Group & {Models} & Accuracy & F1 &AOD & EOD&  $\hat{r}_{sep}$ &$\hat{I}_{sep}$ & $\hat{C}_{sep}$\\ \hline
  
  \multirow{9}{*}{\emph{German}} & \multirow{4}{*}{Gender} & {None} & 0.626
 & 0.688 & 0.071 & 0.021 & 1.038 & 0.011 & 0.007 \\ 
  
  & & {\emph{Reweighing}} & 0.640 & 0.699 & 0.004 &0.031& 1.013 & 0.004 & 0.003 \\
  
  & & {\emph{FairReweighing}(Neighbor)} & 0.640 & 0.699 & 0.004 &0.031& 1.013 & 0.004 & 0.003  \\
  
   & & {\emph{FairReweighing}(Kernel)} & 0.640 & 0.699 & 0.004 &0.031& 1.013 & 0.004 & 0.003 \\ \cline{2-10}
   
   &\multicolumn{4}{c|}{}& \multicolumn{5}{c}{$\hat{C}_{sep}$}\\ \cline{2-10}
   
 & \multirow{4}{*}{Age} & {None} & 0.623
 
 & 0.685 & \multicolumn{5}{c}{0.012} \\

  & & {\emph{FairReweighing}(Neighbor)} & 0.625 & 0.689 & \multicolumn{5}{c}{0.004}\\
  
   & & {\emph{FairReweighing}(Kernel)} & 0.625 & 0.689 & \multicolumn{5}{c}{0.004}\\ \hline
  
  \multirow{9}{*}{\emph{Heart}} & \multirow{4}{*}{Gender} & {None} & 0.828  & 0.845 & 0.073 & 0.120 &1.071 & 0.018 & 0.010 \\ 
  
  & & {\emph{Reweighing}} & 0.807 & 0.831 & 0.048 & 0.010 & 1.013  & 0.005 & 0.003 \\ 
  
  & & {\emph{FairReweighing}(Neighbor)} & 0.807 & 0.831 & 0.048 & 0.010 & 1.013  & 0.005 & 0.003 \\
  
  & & {\emph{FairReweighing}(Kernel)} & 0.807 & 0.831 & 0.048 & 0.010 & 1.013  & 0.005 & 0.003 \\ \cline{2-10}
  
   &\multicolumn{4}{c|}{}& \multicolumn{5}{c}{$\hat{C}_{sep}$}\\ \cline{2-10}

   & \multirow{4}{*}{Age} & {None} & 0.835 & 0.855 & \multicolumn{5}{c}{0.008} \\

  & & {\emph{FairReweighing}(Neighbor)} & 0.822 & 0.844 & \multicolumn{5}{c}{0.005} \\
  
  & & {\emph{FairReweighing}(Kernel)} & 0.822 & 0.844 & \multicolumn{5}{c}{0.005} \\ \hline
  \end{tabular}
  \end{center}
\end{table*}

\subsubsection{Results (\textbf{RQ3})} 
Table \ref{tab:classification} shows the average results of 20 independent trials in classification. 
\begin{itemize}
\item
Here in this table, we used $\hat{r}_{sep}$, $\hat{I}_{sep}$, $\hat{C}_{sep}$, Average Odds Difference (AOD), and Equal Opportunity Difference (eOD) to measure separation in classification problems. AOD and EOD are metrics measuring the violation of equalized odds. The closer they are to $0$, the better the model satisfies separation. Table~\ref{tab:classification} shows that all metrics are always consistent. This validates that, $\hat{r}_{sep}$, $\hat{I}_{sep}$ and $\hat{C}_{sep}$ are applicable to measure separation for classification.
\item
We can also observe that all four measurements show the same trend--- after applying either \emph{Reweighing} or \emph{FairReweighing}, separation will be better satisfied while maintaining prediction accuracy (except for EOD in German data which is already close to 0). More specifically, every metric is the same for \emph{Reweighing} and \emph{FairReweighing} in classification problems. This validates that, \emph{FairReweighing} is identical as \emph{Reweighing} for classification with binary sensitive attributes. While \emph{Reweighing} can only be applied to classification problems with binary sensitive attributes.
\end{itemize}

\subsection{Multiple Sensitive Attributes}

Finally, we consider the scenario of constraining multiple sensitive attributes simultaneously. Again, we use the \emph{Communities and Crimes}~\cite{redmond2002data} data set that has the percentage of the population for each ethnicity group in a community as continuous sensitive attributes. Instead of focusing solely on African Americans, we include the percentage of White and Asian populations and evaluate the fairness of all three sensitive attributes simultaneously.

\begin{table*}[ht]
  \setlength\extrarowheight{2pt}
  \caption{Average accuracy and fairness metrics for each sensitive attributes on Communities and Crimes data.}
  \label{tab:multiple}
    \begin{center}

  \begin{tabular}{l|c|c|c|c|c}
  \hline \multirow{2}{*}{Models} & \multirow{2}{*}{MSE} & \multirow{2}{*}{$R^2$} & Black & White & Asian \\ \cline{4-6}
  
  & & &  $\hat{C}_{sep}$ & $\hat{C}_{sep}$ & $\hat{C}_{sep}$ \\ \hline
  
  {None} & 0.020 & 0.635  & 0.201  & 0.274  & 0.002    \\
  
  {\emph{FairReweighing}} (Neighbor) & 0.025 & 0.529  & 0.051 & 0.048 & 0.002  \\
  
  {\emph{FairReweighing}} (Kernel) & 0.027 & 0.493 & 0.022  & 0.022  & 0.002 \\ \hline
  \end{tabular}
  \end{center}
\end{table*}

\subsubsection{Results (\textbf{RQ4})} The average results for 20 trials of prediction accuracy and respective $\hat{C}_{sep}$ for each sensitive attribute are displayed in Table \ref{tab:multiple}. We notice that \emph{FairReweighing} achieves much lower fairness violations in terms of the percentage of the black and white population compared to the baseline while failing the bias mitigation task for the percentage of Asians. The underperforming is because no discrimination was detected against the Asian population even before pre-processing. To balance fairness between all three attributes, slight bias might be introduced to compensate for the black and white population. As for model performance, both of our proposed treatments preserve relatively high accuracy in predictions, with only approximately 15\% sacrifices. 
It is also important to note that our fairness metrics correctly reflect the orientation of imbalance using the positive or negative number, with the black population being deprived and white being favored in this case. Overall, our algorithm successfully mitigates bias for multiple continuous sensitive attributes.

\section{Threats to validity}
\label{threats}

\noindent\textbf{Sampling Bias} - Conclusions may change if other datasets and machine learning models are used. We have attempted to reduce the sampling bias by experimenting on five different datasets (including one synthetic dataset) and using the same linear regression models across all experiments.

\noindent\textbf{Evaluation Bias} - We focused on the equalized odds fairness notion and evaluated it with $\hat{r}_{sep}$, $\hat{I}_{sep}$ and $\hat{C}_{sep}$. For scenarios where other types of fairness are required, e.g. demographic parity, the proposed algorithm may not apply. 

\noindent\textbf{Estimation Bias} - Both probability density estimation techniques employed in FairReweighing may be impacted by the efficacy of the estimation itself. Conclusions may change if other density estimation techniques are used.

\noindent\textbf{External Validity} - This work focuses on regression problems which are very common in AI software. We are currently working on extending it to machine learning problems in comparative settings.

\section{Conclusion and Future Work}
\label{sect:Conclusion}

Ever since the discovery of underlying discriminatory issues with data-driven methods, the great majority of work in research communities on fairness in machine learning has centered around classic classification problems, where the target variable is either binary or categorical. However, it is only one facet of how machine learning is utilized. In this work, we introduced a pre-processing algorithm \emph{FairReweighing} specifically designed to achieve the separation criterion in regression problems. Furthermore, we make use of mutual information to tackle challenges related to continuous sensitive attributes by estimating the output using a probabilistic regressor. This metric is directly suitable for measuring the extent of separation breach in both regression and classification problems with either discrete or continuous sensitive attributes. Through comprehensive experiments on both synthetic and real-world data sets, \emph{FairReweighing} outperforms existing state-of-the-art algorithms in terms of satisfying separation for regression. 

In our future work, we will 
We also show that \emph{FairReweighing} is reduced to the well-known pre-processing algorithm \emph{Reweighing} when applied to classification problems. 
This work suffers from several limitations:
\begin{enumerate}[label=\textbf{Limitation \arabic*}]
\item The approximations used in either mutual information or direct density ratio measures can be influenced by the effectiveness of the classifier or regressor $\rho(a \mid \cdot)$. Determining whether a classifier's poor performance is due to a fair assessment or a suboptimal model selection can be challenging.
\item \emph{FairReweighing} wasn't able to fully minimize $\hat{r}_{sep}$ to 1, or $\hat{I}_{sep}$/$\hat{C}_{sep}$ to 0 (no discrimination at all) in some cases. This is probably due to the difference in the training and test data. There is still room for improvement.
\item Both probability density estimation techniques employed in \emph{FairReweighing} may be impacted by the efficacy of the estimation itself. It can be challenging to discern whether an inadequate performance is a result of a fair evaluation or suboptimal model selection.
\item Separation only ensures that the model is fair for a given set of ground truth. However, when the ground truth labels are provided by e.g. biased human decisions, the model may inherit the bias from the human decisions even when it satisfies separation with respect to the human decisions.
\end{enumerate}

\section{Declarations}

\subsection{Funding}
Funding in direct support of this work: NSF grant 2245796.

\subsection{Ethical approval}

This study did not require ethical approval as it was based solely on the analysis of publicly available data that did not involve any interaction with human or animal subjects, and contained no personally identifiable information.

\subsection{Informed consent}

Informed consent was not required for this study as it involved the use of publicly available datasets that do not contain personally identifiable information.

\subsection{Author Contributions}
Both authors contributed equally to the work.

\subsection{Data Availability Statement}
The experimental data and the simulation results that support the findings of this study are available at \url{https://github.com/hil-se/FairReweighing}.

\subsection{Conflict of Interest}

This work was supported by National Science Foundation, but the funders had no role in the design, data collection, analysis, interpretation, or publication of this study. The authors declare no other conflicts of interest.

\subsection{Clinical Trial Number}

Clinical trial number: not applicable.

\bibliographystyle{spmpsci}      
\bibliography{mybib}   

%
%

\end{document}